\theoremstyle{plain}
\newtheorem{theorem}{Theorem}[section]
\newtheorem{proposition}[theorem]{Proposition}
\theoremstyle{definition}
\newtheorem{definition}[theorem]{Definition}
\theoremstyle{remark}
\icmltitlerunning{Approximately Equivariant Networks for Imperfectly Symmetric Dynamics}
\begin{document}

\twocolumn[
\icmltitle{Approximately Equivariant Networks for Imperfectly Symmetric Dynamics}



\icmlsetsymbol{equal}{*}

\begin{icmlauthorlist}
\icmlauthor{Rui Wang}{equal,sd}
\icmlauthor{Robin Walters}{equal,neu}
\icmlauthor{Rose Yu}{sd}
\end{icmlauthorlist}

\icmlaffiliation{sd}{University of California San Diego}
\icmlaffiliation{neu}{Northeastern University}

\icmlcorrespondingauthor{Rui Wang}{ruw020@ucsd.edu}

\icmlkeywords{Machine Learning, ICML}

\vskip 0.3in
]



\printAffiliationsAndNotice{\icmlEqualContribution} 

\begin{abstract}
Incorporating symmetry as an inductive bias into neural network architecture has led to improvements in generalization, data efficiency, and physical consistency in dynamics modeling. Methods such as CNNs or equivariant neural networks use weight tying to enforce symmetries such as shift invariance or rotational equivariance.  However, despite the fact that physical laws obey many symmetries, real-world dynamical data rarely conforms to strict mathematical symmetry either due to noisy or incomplete data or to symmetry breaking features in the underlying dynamical system.  We explore  approximately equivariant networks which are biased towards preserving symmetry but are not strictly constrained to do so.  By relaxing equivariance constraints, we find that our models can outperform both baselines with no symmetry bias and baselines with overly strict symmetry in both simulated turbulence domains and real-world multi-stream jet flow.   
\end{abstract}

\section{Introduction}

Symmetry and equivariance are fundamental to the success of deep learning \cite{bronstein2021geometric}. The canonical examples are translation invariance in convolutional layers \cite{fukushima1982neocognitron,lecun1989backpropagation,krizhevsky2012imagenet}, and permutation invariance in graph neural networks \cite{bruna2013spectral,battaglia2018relational,maron2018invariant}. Recently, equivariant networks, which encode symmetry information in network architectures,  have gained significant attention for modeling structured and complex data \cite{ravanbakhsh2017equivariance,zaheer2017deep,kondor2018generalization,cohen2016group,worrall2017harmonic,thomas2018tensor,cohen2018general,maron2020learning, Walters2021ECCO}.

However, existing equivariant networks assume perfect symmetry in the data. The network is approximating a function that is strictly invariant or equivariant under a given group action. However, real-world data are very rarely perfectly symmetric. For example, in turbulence modeling, even though the governing equations of turbulence satisfy many different symmetries such as scale invariance \cite{holmes2012turbulence}, effects such as varying external forces, certain boundary conditions, or the presence of missing values would break these symmetries to varying degrees. This significantly hinders the potential applications of equivariant networks. Approximately equivariant networks could outperform both strictly equivariant networks and highly flexible models in learning many dynamics in the real world, as shown in Figure \ref{vis:salesman}.

\begin{figure}[t]
	\centering
	\hspace*{-0.3cm}
	\includegraphics[width=0.5\textwidth]{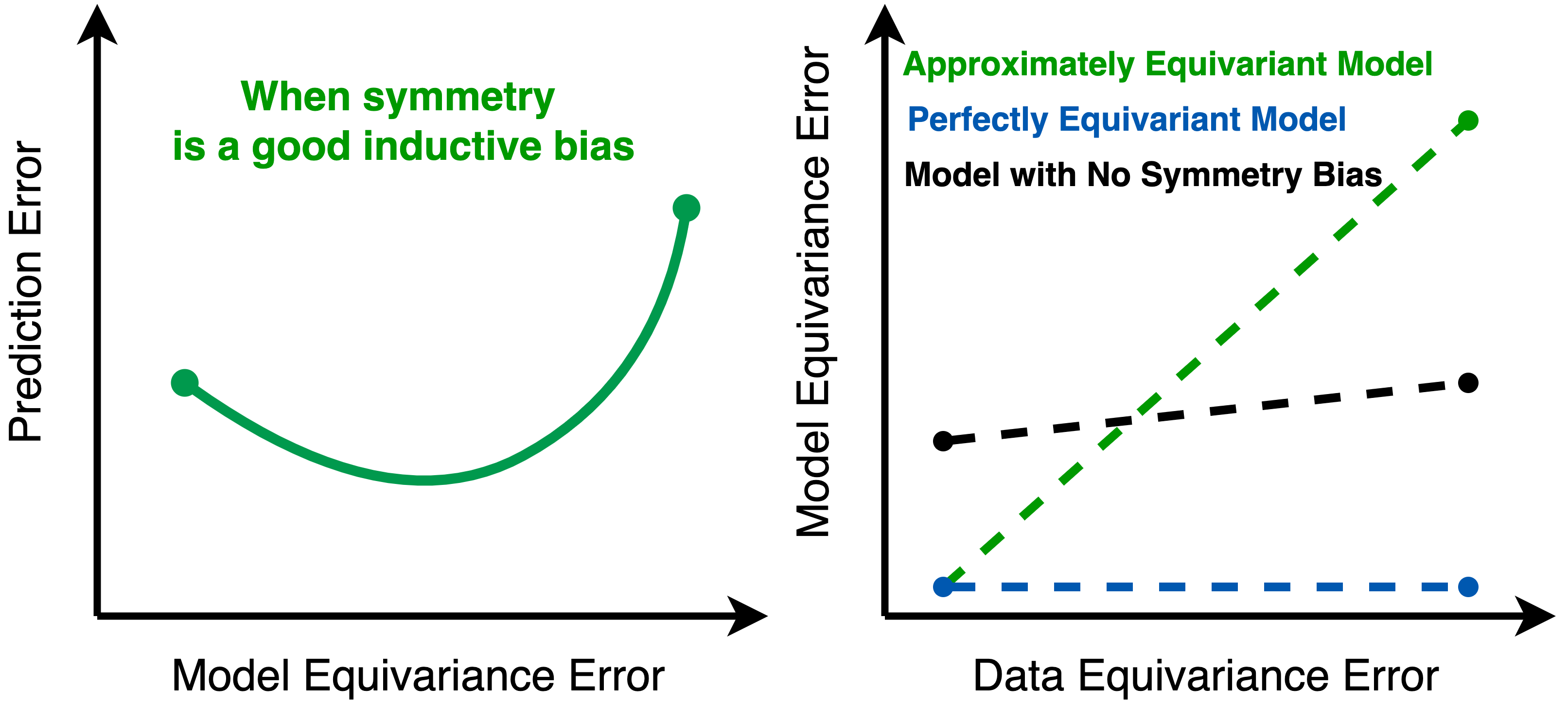}
	\caption{\textbf{Left}: When symmetry is a good inductive bias, prediction performance increases as equivariance or invariance is imposed in the model. But real-world data is very rarely perfectly symmetric, and so relaxing the strict constraint in equivariant networks to balance inductive bias and expressivity can further improve predictive performance. \textbf{Right}: Highly flexible models have trouble achieving zero equivariance error without the guide of appropriate symmetry biases when the data is symmetric. Perfectly equivariant models maintain zero equivariance error, which is overly restricted when data is not perfectly symmetric. An ideal model for real world dynamics should be approximately equivariant and automatically learn the correct amount of symmetry in the data.}
	\label{vis:salesman}
\end{figure}

Relaxing the rigid assumption in equivariant networks to balance inductive bias and expressivity in deep learning has been the pursuit of a few recent works. For example, \citet{Elsayed2020Revisiting} showed that spatial invariance can be overly restrictive, and relaxing spatial weight sharing in standard convolution can improve image classification accuracy.    \citet{d2021convit} enforce a convolutional inductive bias in self-attention layers at \textit{initialization} to improve vision Transformers. Residual Pathway Priors \cite{finzi2021residual} convert hard architectural constraints into soft priors by placing a higher likelihood on the ``residual''.  The residual explains the difference between the structure in the data and the inductive bias encoded by an equivariant model. \citet{wang2021equivariant} proposed {Lift Expansion} which factorizes the data into equivariant and nonequivariant components and models them jointly. Despite progress, a formal definition of approximate symmetry does not exist. While existing research focuses on translation symmetry, the rich groups of symmetry in high-dimensional dynamics learning problems are unexplored.

In this paper, we first define approximate symmetry. It gives rise to a new class of approximately equivariant networks that avoid stringent symmetry constraints while maintaining favorable inductive biases for learning. Specifically, we generalize the weight relaxation scheme originally proposed by \cite{Elsayed2020Revisiting}. We study three symmetries that are common in dynamics: rotation $\mathrm{SO(2)}$, scaling $\mathbb{R}_{>0}$, and Euclidean $E(2)$. For group convolution, we relax the weight-sharing scheme by expressing the kernel as a weighted combination of multiple filter banks. For steerable CNNs, we introduce dependencies on the input to the kernel basis. We apply our approximate symmetry networks to the challenging problem of forecasting fluid flow and observe significant improvements for both synthetic and real-world datasets \footnote{We open-source our code \url{https://github.com/Rose-STL-Lab/Approximately-Equivariant-Nets}}. Our contributions include:
\begin{itemize}[noitemsep,topsep=2pt,parsep=5pt,partopsep=2pt,leftmargin=*]
\item We formally characterize the notion of approximate equivariance, which interpolates between no inductive bias and a strong inductive bias from equivariance. 
\item We introduce a new class of approximately equivariant networks for modeling imperfectly symmetric dynamics by relaxing equivariance constraints.
\item We demonstrate that our approximately equivariant models can outperform baselines with
no symmetry bias, baselines with overly strict symmetry, and SoTA approximately equivariant models in both simulated smoke simulations and real experimental jet flow data.
\end{itemize}






\section{Mathematical Preliminaries}

\subsection{Equivariant Functions and Neural Networks}

Equivariant neural networks incorporate an explicit symmetry constraint.  They are typically employed when a priori knowledge, such as first principles from physics, imply the ground truth function also respects a symmetry. 

\paragraph{Equivariance and Invariance.}
Formally, a function $f \colon X \to Y$ may be described as respecting the symmetry coming from a group $G$ using the notion of equivariance.  Assume that an input group representation $\rho_{\text{in}}$ of $G$ acts on $X$ and an output representation $\rho_{\text{out}}$ acts on $Y$. We say a function $f$ is \textbf{$G$-equivariant} if 
\begin{equation}
    f( \rho_{\text{in}}(g)(x)) = \rho_{\text{out}}(g) f(x) \label{eq:strictequivariance}
\end{equation}
for all $x \in X$ and $g \in G$. The function $f$ is \textbf{$G$-invariant} if $f( \rho_{\text{in}}(g)(x)) = f(x)$ for all $x \in X$ and $g \in G$.  This is a special case of equivariance for the case $\rho_{\mathrm{out}}(g) = 1$.

\paragraph{Strictly Equivariant Neural Networks.}
Given an equivariant $f \colon X \to Y$, learning can be accelerated by optimizing within a model class of functions $\lbrace f_\theta \rbrace$ which are restricted to be equivariant.  Since the composition of equivariant functions is again equivariant, in general a neural network will be strictly equivariant if all of its layers, linear, nonlinear, pooling, aggregation, and normalization, are equivariant.  Most of the variation and challenge in this area is in designing trainable equivariant linear layers.  Two strategies, involving weight sharing and weight tying, are $G$-convolution and $G$-steerable CNN. 
See \citet{bronstein2021geometric} for more details.

\paragraph{$\mathbf{G}$-Equivariant Group Convolution.} 
A $G$-equivariant group convolution \cite{cohen2016group} takes as input a $c_{\mathrm{in}}$-dimensional feature map $f \colon G \to \mathbb{R}^{c_{\mathrm{in}}}$ and convolves it with a kernel $\Psi \colon G \to \mathbb{R}^{c_\mathrm{out} \times c_\mathrm{in}}$ over a group $G$,
\begin{equation}
[f\star_G\Psi](g) = \sum_{h\in G}f(h)\Psi(g^{-1}h).
\label{eqn:group_conv}
\end{equation}
Here, we assume $G$ finite, however, $G$ may also be taken to be compact if the sum is replaced with an integral.  Group convolution achieves equivariance by weight sharing since the kernel weight $\Psi$ at $(g,h)$ depends only on $g^{-1}h$ and thus pairs with equal $g^{-1}h$ share weights. 

\paragraph{$\mathbf{G}$-Steerable Convolution.} \cite{Cohen2020STEERABLE}
Let $f$ be the input feature map $f \colon \mathbb{R}^2 \to \mathbb{R}^{c_\mathrm{in}}$.  Fix a subgroup $H \subset O(2)$, which acts on $\mathbb{R}^2$ by matrix multiplication and on the
 input and output channel spaces $\mathbb{R}^{c_\mathrm{in}}$ and $\mathbb{R}^{c_\mathrm{out}}$ by representations $\rho_{\mathrm{in}} \colon G \to \mathbb{R}^{c_\mathrm{in} \times c_\mathrm{in}}$ and $\rho_{\mathrm{out}} \colon G \to \mathbb{R}^{c_\mathrm{out} \times c_\mathrm{out}}$ respectively.   

We may convolve $f$ with a matrix-valued kernel $\Phi \colon \mathbb{R}^2 \to \mathbb{R}^{c_\mathrm{out} \times c_\mathrm{in}}$.
In practice, we discretize the input as $f \colon \mathbb{Z}^2 \to \mathbb{R}^{c_\mathrm{in}}$ and kernel $\Phi \colon \mathbb{Z}^2 \to \mathbb{R}^{c_\mathrm{out} \times c_\mathrm{in}}$ and compute the $H$-action by interpolation after rotation.
By \cite{weiler20183d}, the standard 2D convolution $f \star_{\mathbb{Z}^2} \Phi$ is $H$-equivariant and $\mathbb{Z}^2$-translation equivariant when
 \begin{equation}
 \Phi(hx) = \rho_{\text{out}}(h) \Phi(x) \rho_{\text{in}}(h^{-1}), \ \forall h \in H. \label{steerable}
 \end{equation}
 Again, $\Phi(hx)$ is computed based on the group and the choice of input and output representations.
 This linear constraint induces dependence in the weights, which is called weight tying. Solving for a basis of solutions to \eqref{steerable} gives an equivariant kernel basis $\{\Phi_l\}_{l=1}^L$ which can be combined using trainable coefficients $\Phi = \sum_{l=1}^L w_l \Phi_l$ to learn any element of the solution space formed by (3).

\vspace{-5pt}
\subsection{Approximate Equivariance}
\begin{figure}[htb!]
	\centering
	\includegraphics[width=0.48\textwidth]{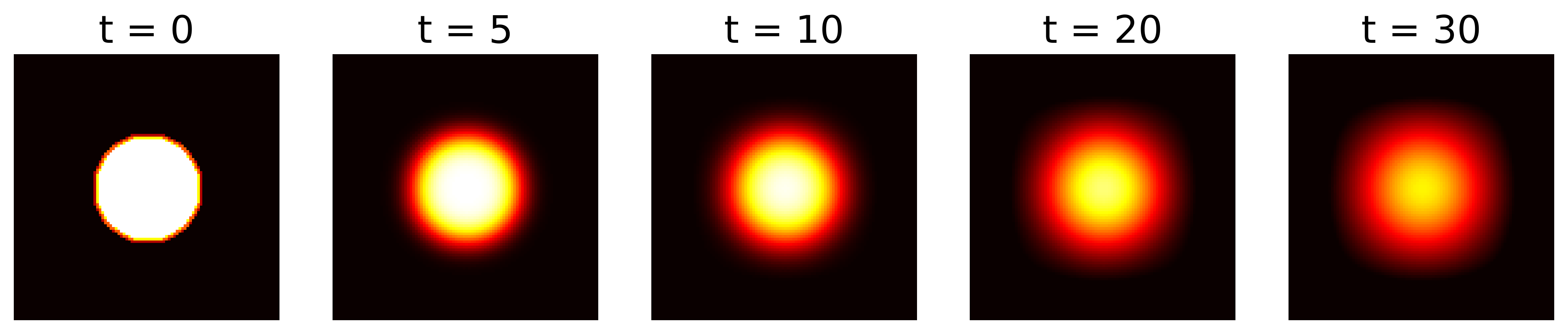}
	\includegraphics[width=0.48\textwidth]{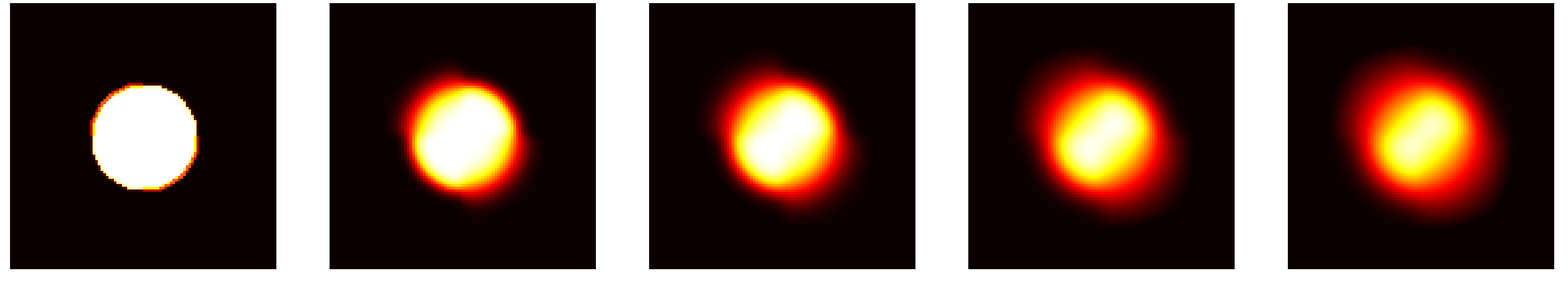}
	\caption{Simulated diffusion of heat in a metal plate with (top) uniform diffusion coefficient resulting in perfect symmetry and (bottom) slightly varying diffusion coefficient resulting in approximate symmetry.}
	\label{vis:heat}
\end{figure}
    


Real-world dynamics data may not satisfy the strict equivariance as in \eqref{eq:strictequivariance}.  However, since many of the governing equations contain symmetry, the resulting system may still be approximately equivariant, as defined below.  For example, while the heat equation itself is fully rotationally symmetric, in practice, imperfections in the thickness of the metal or the composition of the metal can lead to imperfect symmetry, as shown in Figure \ref{vis:heat}.
Below, we give the formal definition of approximate symmetry:
\begin{definition}[Approximate Equivariance]\label{def_app}
Let $f \colon X \to Y$ be a function and $G$ be a group. Assume that $G$ acts on $X$ and $Y$ via representations $\rho_{X}$ and $\rho_{Y}$. We say $f$ is $\epsilon$-approximately $G$-equivariant if for any $g \in G$, 
\[\vspace{-6pt}
    \|f(\rho_{X}(g)(x)) - \rho_{Y}(g)f(x)\| \leq \epsilon.
\]
\end{definition}
Note that strictly equivariant functions are $\epsilon = 0$ approximately equivariant.


\section{Approximately Equivariant Networks }
Symmetry in equivariant networks is enforced by strict constraints on the weights. Here we propose relaxing weight-sharing and weight-tying to model approximate symmetries. 
\citet{Elsayed2020Revisiting} showed that relaxing spatial weight sharing in standard convolution neural nets can improve image classification accuracy.  Whereas 2D convolutions are shift equivariant, this relaxed 2D convolution is only approximately equivariant. Our method generalizes this approach to other symmetry groups, including rotation $\mathrm{SO(2)}$, scaling $\mathbb{R}_{>0}$, and Euclidean $E(2)$.  Specifically, we relax the strict weight-sharing and weight-tying constraints in both group convolution and steerable CNNs. 


\subsection{Relaxed Group Convolution}
The $G$-equivariance of group convolution results from the shared kernel $\Psi(g^{-1}h)$ in \eqref{eqn:group_conv}.  To relax this and consequently relax the $G$-equivariance, we replace the single kernel $\Psi$ with a set of kernels $\lbrace \Psi_l \rbrace_{l=1}^L$. We define the new kernel $\Psi$ as a linear combination of $\Psi_l$ with coefficients that vary with $h$.  Thus,  we introduce symmetry-breaking dependence on the specific pair $(g,h)$, 
\begin{equation}
    \Psi(g,h) = \sum_{l=1}^L w_l(h) \Psi_l(g^{-1}h). \label{equ:relaxed_group_conv}
\end{equation}
We define the relaxed group convolution by multiplication with $\Psi$ as such 
\begin{equation}
\begin{aligned}
  [f \tilde{\star}_G \Psi](g) &= \sum_{h\in G}f(h)\Psi(g, h)\\
  &= \sum_{h\in G}\sum_{l=1}^L f(h) w_l(h) \Psi_l(g^{-1}h).
\end{aligned}
\end{equation}
By varying the number of kernels $L$, we can control the degree of equivariance. Small $L$ imposes stronger symmetry, while large $L$ gives more flexibility. In our experiments, we found that $L = 3$ gave the best prediction performance in most cases. The weights $w_i(h) \in \mathbb{R}$ and the kernels $\Psi_l(g^{-1}h) \in \mathbb{R}^{c_\mathrm{out} \times c_\mathrm{in}}$ can be learnt from data.  Relaxed group convolution reduces to group convolution and is fully equivariant if and only if $g_1^{-1}h_1 = g_2^{-1} h_2$ implies $\Psi(g_1,h_1) = \Psi(g_2,h_2)$.  In particular, this occurs if $w_l(h_1) =  w_l(h_2)$ for all $h_1,h_2 \in G$ and for all $l=1,\cdots,L$.

In dynamics learning, we consider velocity vectors as inputs.  To apply group convolution over the discrete rotation group $C_n$, we first lift these velocity vectors to feature maps $f \colon C_n \to \mathbb{R}$ as described in \citet{Walters2021ECCO} Table 1. 
Given $v = (a,b) \in \mathbb{R}^2$, for $i \in C_n$ we define $f(i) = c a \cos(2\pi i/n) + cb \sin(2\pi i/n) $ where $c \in \mathbb{R}$ is a trainable weight.  
This amounts to mapping the irreducible $\rho_1$ representation of $C_n$ to the regular representation.  This process can also be extended to lifting velocity fields to features $f \colon C_4 \ltimes (\mathbb{Z}^2,+) \to \mathbb{R}$ over the group of discrete rotations and translations.  As an additional advantage, the feature maps $f$ are compatible with element-wise non-linearity.  

  
\subsection{Relaxed Steerable Convolution}
Although relaxed group convolution does not require pre-computing an equivariant kernel basis, it is limited to discrete (or compact) groups and is inefficient when the group order is large. Thus, we also propose relaxed steerable convolutions. 

\paragraph{$\mathbf{G}$-Steerable 2D Convolution:} First, we explicitly write out the formula for $G$-steerable 2D convolution described by \eqref{steerable}. 
Let $\lbrace \Phi_l \rbrace_{l=1}^L$ be an equivariant kernel basis of $L$ non-trainable kernels that satisfy \eqref{steerable} for given input and output representations $\rho_{\mathrm{in}}$ and $\rho_{\mathrm{out}}$. Denote $K = \lbrace -k,\ldots,k \rbrace$.
Denote the input feature as $f_{\mathrm{in}} \colon \mathbb{Z}^2 \to \mathbb{R}^{c_\mathrm{in}}$,  predetermined equivariant kernels $\Phi_l \colon K^2 \to \mathbb{R}^{c_{\mathrm{out}}\times c_{\mathrm{in}}}$, and a trainable weight tensor $w \in \mathbb{R}^{c_{\mathrm{out}}\times c_{\mathrm{in}} \times L}$. Then a $G$-steerable convolution produces an output $f_{\mathrm{out}} = f_{\mathrm{in}} \star_{\mathbb{Z}^2} \Phi \colon \mathbb{Z}^2 \to \mathbb{R}^{c_\mathrm{out}}$ defined as
\begin{equation}
\resizebox{0.95\hsize}{!}{$f_{\mathrm{out}}(\mathbf{x}) = \sum_{\mathbf{y} \in \mathbb{Z}^2} \sum_{l=1}^L (w_l \odot \Phi_l(\mathbf{y})) f_{\mathrm{in}}(\mathbf{x} + \mathbf{y})$} \label{eqn:gsteerable}
\end{equation}
for a position  $\mathbf{x} \in \mathbb{Z}^2$  in the input. Here $\odot$ denotes element-wise product in $\mathbb{R}^{c_{\mathrm{out}}\times c_{\mathrm{in}}}$, and $\mathbf{y} \in \mathbb{Z}^2$ is a spatial location in the kernel.


\paragraph{Relaxed $G$-Steerable 2D Convolution:}  We relax \eqref{eqn:gsteerable} and break symmetry by introducing a weight $w$ that depends on $\mathbf{y}$. As $w$ is freely trainable, this breaks the strict positional dependence of $\Phi_l$ imposed by \eqref{steerable}.  
Formally, letting $w \colon K^2 \to \mathbb{R}^{c_{\mathrm{out}}\times c_{\mathrm{in}} \times L}$ be the weight, we define the relaxed steerable convolution $f_{\mathrm{out}} = f_{\mathrm{in}} \hat{\star}_{\mathbb{Z}^2} \Phi$ by 
\begin{equation}
\resizebox{0.95\hsize}{!}{$f_{\mathrm{out}}(\mathbf{x}) = \sum_{\mathbf{y} \in \mathbb{Z}^2} \sum_{l=1}^L (w_l(\mathbf{y}) \odot \Phi_l(\mathbf{y})) f_{\mathrm{in}}(\mathbf{x} + \mathbf{y})$}. \label{eqn:relaxed_steerable}
\end{equation}


When $G$ is a rotation group and $k > 0$, we can define $w_l(\mathbf{y}) = w_l(\theta)$, where $\theta = \mathrm{arctan2}(\mathbf{y})$.  Since the weight depends only on the angle of the vector $\mathbf{y}$, we use fewer parameters. To prevent the model from becoming overly relaxed, we initialize $w_l(\mathbf{y})$ equally for every $\mathbf{y}$ and penalize the value differences in $w_l(\mathbf{y})$ during training, which we  describe in Section \ref{soft_equ_reg}. 

For the translation group, we can relax the steerable convolution by further allowing $w \colon \mathbb{Z}^2 \times K^2 \to \mathbb{R}^{c_{\mathrm{out}}\times c_{\mathrm{in}} \times L}$ to vary with the input position $\mathbf{x}$ as well:
\begin{equation}
\resizebox{0.95\hsize}{!}{$f_{\mathrm{out}}(\mathbf{x}) = \sum_{\mathbf{y} \in \mathbb{Z}^2} \sum_{l=1}^L (w_l(\mathbf{x},\mathbf{y}) \odot \Phi_l(\mathbf{y})) f_{\mathrm{in}}(\mathbf{x} + \mathbf{y})$} .\label{equ:combined_relaxed}
\end{equation}
However, the above equation is impractical as  the space of the trainable weight is too large.  We propose using a low-rank factorization of $w$ to reduce dimensionality,
\[
    w_l(\mathbf{x},\mathbf{y}) = \sum_{r=1}^R a_r(\mathbf{x}) b_{r,l}(\mathbf{y})  
\]
where $a_r \colon \mathbb{Z}^2 \to \mathbb{R}$ and $b_{r,l}: K^2 \to \colon \mathbb{R}^{c_\mathrm{out} \times c_\mathrm{in}}$. Then \eqref{equ:combined_relaxed} becomes a combination of relaxed translation group convolution and relaxed steerable convolution.

\subsection{Soft Equivariance Regularization}\label{soft_equ_reg}
To encourage equivariance and prevent the model from becoming over-relaxed, we add regularization terms to the loss function on the symmetry-independent weights $w$ during training. For relaxed group convolution, we add the following regularizer  to constrain  $w$ in \eqref{equ:relaxed_group_conv},
\[
    L_{\mathrm{gconv}} = \alpha\sum_{i=1}^L\sum_{g,h \in G}\|w_i(h) - w_i(g)\|.
\]
For  relaxed steerable convolution, we impose the following loss term to prevent the $w \colon K^2 \to \mathbb{R}^{c_{\mathrm{out}}\times c_{\mathrm{in}} \times L}$ in \eqref{eqn:relaxed_steerable} from varying too much across the kernel spatial domains,
\[
    L_{\mathrm{sconv}} = \alpha\left(\left\|\frac{\partial w(m,n)}{\partial m}\right\| + \left\|\frac{\partial w(m,n)}{\partial n}\right\|\right).
\]
The hyperparameter $\alpha$ does not directly control how equivariant the model is, it only a places a equivariance prior on the model to be as equivariant as possible given the data.

\subsection{Other Alternatives for  Approximate Symmetry}\label{other_app}
We also explored  three alternative ways of building approximately equivariant models. 

\textbf{Lift expansion.} \citet{wang2021equivariant} proposed \texttt{Lift Expansion} for modeling partial symmetry, in which the input space can be factorized into an equivariant subspace and a non-equivariant subspace. The model uses a non-equivariant encoder that is tiled across the equivariant dimensions of the feature map as additional channels in equivariant neural nets. This method can  also model approximate symmetry when both the non-equivariant encoder and the main equivariant backbone are fed with the same input. The encoder can extract non-equivariant features that are then treated as having a trivial representation type and included in the main equivariant model to break perfect equivariance.  Note that while treating the output data as having a trivial representation type enforces invariance, treating the input data this way imposes no constraints. 

\textbf{Constrained locally connected neural nets (CLCNN).}
Another way of building approximately equivariant models is using a very flexible model while imposing soft equivariance constraints on the kernels. We use a locally connected neural network that has the same locality property as convolution but the weights are not shared across the spatial domain \cite{wadekar_2019}. Thus, it does not have translation equivariance and employs many more parameters than convolution. Suppose $\Phi \colon \mathbb{Z}^2 \times K^2 \to \mathbb{R}^{c_\mathrm{out} \times c_\mathrm{in}}$ is the filter bank. In addition to the prediction loss, we use the equivariant kernel constraint \eqref{steerable} in the objective with a hinge loss instead of solving the constraints explicitly before training:
\[
    L_{\mathrm{hinge}} = \alpha\sum_{h \in G}\|\rho_{\text{out}}(h) \Phi(x) \rho_{\text{in}}(h^{-1}) - \Phi(hx)\|.
\]
\textbf{Combination of non-equivariant and equivariant layers.}
We also build models that begin with non-equivariant layers followed by equivariant layers. The early layers of the model  map observations with approximate symmetries to a space with an explicit symmetry actions.

\subsection{Equivariant Error Analysis}

Our hypothesis is that if the ground truth function $f$ is approximately equivariant, then a model class with a similar degree of approximate equivariance would better approximate $f$ than a strictly equivariant class or a class without bias towards symmetry.  

We define equivariance error, which quantifies how much a function $f$ is approximately equivariant.

\begin{definition}[Equivariance Error]
Let $f \colon X \to Y$ be a function and $G$ be a group. Assume that $G$ acts on $X$ and $Y$ via representation $\rho_{X}$ and $\rho_{Y}$. Then the \textbf{equivariance error} of $f$ is 
\[
\|f\|_{EE} = \sup_{x,g}
\|f(\rho_{X}(g)(x)) - \rho_{Y}(g)f(x)\|.
\]
\end{definition}

That is, $f$ is $\epsilon$-approximately equivariant if and only if $\| f\|_\mathrm{EE} < \epsilon.$ 

We note that a strictly equivariant model  cannot perfectly learn an approximately equivariant  function. As stated by the following proposition, such a model would make errors at least proportional to the equivariance error. This motivates our choice to use the model class of approximately equivariant networks.

\begin{proposition}
Let $f \colon X \to Y$ where $G$ acts on $X$ and $Y$ by $\rho_X$ and $\rho_Y$ which are norm-preserving.   Assume $f$ is approximately equivariant with $\|f\|_{\mathrm{EE}} \geq 0$. Assume $f_\theta$ is a $G$-equivariant approximator for $f$.  Then there exists $x_0 \in X$ such that
\[
\| f(x_0) - f_\theta(x_0) \| \geq \|f\|_{\mathrm{EE}} /2.
\]
\end{proposition}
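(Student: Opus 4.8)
The plan is to exploit the gap between the approximate equivariance of $f$ and the exact equivariance of $f_\theta$. By definition of the equivariance error, for every $\delta > 0$ there exist $x \in X$ and $g \in G$ such that $\|f(\rho_X(g)(x)) - \rho_Y(g) f(x)\| > \|f\|_{\mathrm{EE}} - \delta$. Fix such a pair $(x, g)$ and write $x_1 = x$, $x_2 = \rho_X(g)(x)$. Since $f_\theta$ is $G$-equivariant, $f_\theta(x_2) = f_\theta(\rho_X(g)(x_1)) = \rho_Y(g) f_\theta(x_1)$. The idea is that $f$ "moves" by more than $\|f\|_{\mathrm{EE}} - \delta$ under the group action (after accounting for $\rho_Y(g)$), while $f_\theta$ moves consistently with the action, so $f_\theta$ cannot be close to $f$ at both $x_1$ and $x_2$ simultaneously.

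The key computation is a triangle inequality combined with the norm-preserving hypothesis on $\rho_Y$. First I would write
\[
f(x_2) - \rho_Y(g) f(x_1) = \bigl(f(x_2) - f_\theta(x_2)\bigr) + \bigl(f_\theta(x_2) - \rho_Y(g) f_\theta(x_1)\bigr) + \rho_Y(g)\bigl(f_\theta(x_1) - f(x_1)\bigr).
\]
The middle term vanishes by equivariance of $f_\theta$. Taking norms and using that $\rho_Y(g)$ is norm-preserving on the last term gives
\[
\|f(x_2) - \rho_Y(g) f(x_1)\| \leq \|f(x_2) - f_\theta(x_2)\| + \|f(x_1) - f_\theta(x_1)\|.
\]
Hence $\max\bigl(\|f(x_1) - f_\theta(x_1)\|,\ \|f(x_2) - f_\theta(x_2)\|\bigr) \geq \tfrac12(\|f\|_{\mathrm{EE}} - \delta)$. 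Letting $x_0$ be whichever of $x_1, x_2$ achieves the max, we get $\|f(x_0) - f_\theta(x_0)\| \geq \tfrac12(\|f\|_{\mathrm{EE}} - \delta)$ for every $\delta > 0$.

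The one subtlety is passing from the $\delta$-approximate version to the clean bound $\|f\|_{\mathrm{EE}}/2$: since $\delta$ was arbitrary and $X$ need not be finite, the supremum in the definition of $\|f\|_{\mathrm{EE}}$ may not be attained, so naively one only gets $\sup_{x_0} \|f(x_0) - f_\theta(x_0)\| \geq \|f\|_{\mathrm{EE}}/2$ rather than the existence of a single $x_0$. I expect this to be the main (and really the only) obstacle, and I would handle it either by strengthening the statement to the supremum form, or by noting that if the supremum defining $\|f\|_{\mathrm{EE}}$ is attained at some $(x^*, g^*)$ (e.g.\ when $X$ is compact and $f$ continuous) then the above argument with $\delta = 0$ produces the desired $x_0$ exactly; absent compactness one takes the statement to assert existence of $x_0$ with error arbitrarily close to $\|f\|_{\mathrm{EE}}/2$, which suffices for the motivating point that strictly equivariant models incur irreducible error. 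The rest of the argument is just the triangle inequality and is entirely routine.
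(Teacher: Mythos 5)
Your proof is correct and follows essentially the same route as the paper's: insert the vanishing term $f_\theta(gx) - \rho_Y(g) f_\theta(x)$, apply the triangle inequality and norm-preservation, and take $x_0$ to be whichever of $x, gx$ realizes the larger error. The only difference is that you are more careful than the paper, which simply asserts the supremum defining $\|f\|_{\mathrm{EE}}$ is attained; your $\delta$-argument correctly identifies that without attainment the clean bound should really be stated as a supremum over $x_0$.
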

For simplicity, we assume representations $\rho$ which are norm preserving, as with rotations, reflections, and permutations, although this assumption can be removed by inserting a factor to account for the operator norm $\| g \|$.

By similar logic, we can also show that given a model class which contains $\epsilon$-approximately equivariant functions for varying $\epsilon$, the equivariance error of the approximator will converge to the equivariance error of the ground truth function as they converge in model error. Although the supremum norm is used for equivariance and model error, the result holds for other norms as well. 

\begin{proposition}
Let $\lbrace f_\theta \rbrace$ be an approximately equivariant model class with varying $\| f_\theta \|_\mathrm{EE} \in \mathbb{R}_{\geq 0}$.  Assume a $G$-invariant norm.  Let $f \colon X \to Y$ be a function with $\| f \|_{\mathrm{EE}} = \epsilon$.  Assume $\| f - f_\theta \|_\infty \leq c$.  Then $| \| f \|_\mathrm{EE} - \| f_\theta \|_\mathrm{EE} | \leq 2c + \epsilon.$ 
\end{proposition}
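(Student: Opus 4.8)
The plan is to recognize the equivariance error $\|\cdot\|_{\mathrm{EE}}$ as a seminorm on the vector space of maps $X \to Y$, deduce a reverse triangle inequality from its subadditivity, and then observe that a sup-norm-small perturbation can change $\|\cdot\|_{\mathrm{EE}}$ by at most twice its size once the norm on $Y$ is $G$-invariant. Writing $h = f - f_\theta$, the target inequality will follow from the chain $|\,\|f\|_{\mathrm{EE}} - \|f_\theta\|_{\mathrm{EE}}\,| \le \|h\|_{\mathrm{EE}} \le 2\|h\|_\infty \le 2c \le 2c+\epsilon$. (In fact the hypothesis $\|f\|_{\mathrm{EE}} = \epsilon$ is not needed for the stated bound, and the sharper constant $2c$ is what really holds; this is the version consistent with the convergence claim in the surrounding discussion.)

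First I would check subadditivity: for any maps $a,b \colon X \to Y$ and any $x \in X$, $g \in G$, the triangle inequality in $Y$ splits $\|(a+b)(\rho_X(g)x) - \rho_Y(g)(a+b)(x)\|$ into $\|a(\rho_X(g)x) - \rho_Y(g)a(x)\| + \|b(\rho_X(g)x) - \rho_Y(g)b(x)\|$, which is at most $\|a\|_{\mathrm{EE}} + \|b\|_{\mathrm{EE}}$; taking the supremum over $x,g$ gives $\|a+b\|_{\mathrm{EE}} \le \|a\|_{\mathrm{EE}} + \|b\|_{\mathrm{EE}}$. Applying this to $f = f_\theta + h$ and to $f_\theta = f + (-h)$, together with the obvious $\|-h\|_{\mathrm{EE}} = \|h\|_{\mathrm{EE}}$, yields the reverse triangle inequality $|\,\|f\|_{\mathrm{EE}} - \|f_\theta\|_{\mathrm{EE}}\,| \le \|h\|_{\mathrm{EE}}$.

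It then remains to bound $\|h\|_{\mathrm{EE}}$. For any $x,g$, the triangle inequality gives $\|h(\rho_X(g)x) - \rho_Y(g)h(x)\| \le \|h(\rho_X(g)x)\| + \|\rho_Y(g)h(x)\|$. This is the one step where the $G$-invariance of the norm is essential: it lets us rewrite $\|\rho_Y(g)h(x)\| = \|h(x)\|$, so both summands are values of $\|h(\cdot)\|$ and hence each is at most $\|h\|_\infty \le c$. Taking the supremum over $x,g$ gives $\|h\|_{\mathrm{EE}} \le 2c$, and combining with the reverse triangle inequality proves the claim, with the $\epsilon \ge 0$ slack to spare.

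There is no real obstacle here; the only points requiring care are (i) invoking $G$-invariance of the norm at exactly the place where $\|\rho_Y(g)h(x)\|$ appears, and (ii) noting that a single approximator $f_\theta$ with $\|f - f_\theta\|_\infty \le c$ suffices, the phrase ``varying $\|f_\theta\|_{\mathrm{EE}}$'' being contextual rather than used in the argument. If one prefers to avoid the seminorm packaging, the same estimate can be presented directly as two four-term triangle-inequality chains: for each $x,g$, insert $f$ at both the perturbed point and the group-acted point to get $\|f_\theta\|_{\mathrm{EE}} \le \|f\|_{\mathrm{EE}} + 2\|f-f_\theta\|_\infty$, then swap the roles of $f$ and $f_\theta$ for the reverse bound.
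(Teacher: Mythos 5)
Your proof is correct, and it takes a genuinely different (and cleaner) route than the paper's. The paper's proof is a single four-term triangle-inequality chain: starting from $\| \rho_Y(g) f_\theta(x) - f_\theta(\rho_X(g)x)\|$ it inserts $\rho_Y(g)f(x)$ and $f(\rho_X(g)x)$, bounds the two outer terms by $c$ using $\|f-f_\theta\|_\infty \le c$ together with $G$-invariance of the norm, bounds the middle term by $\epsilon = \|f\|_{\mathrm{EE}}$, and concludes $\|f_\theta\|_{\mathrm{EE}} \le 2c+\epsilon$; the other half of the absolute-value claim (namely $\|f\|_{\mathrm{EE}} - \|f_\theta\|_{\mathrm{EE}} \le \epsilon \le 2c+\epsilon$, which is immediate since $\|f_\theta\|_{\mathrm{EE}} \ge 0$) is left implicit. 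You instead observe that $\|\cdot\|_{\mathrm{EE}}$ is a seminorm (your subadditivity check quietly uses linearity of $\rho_Y(g)$, which holds because it is a representation), invoke the reverse triangle inequality, and bound $\|f-f_\theta\|_{\mathrm{EE}} \le 2\|f-f_\theta\|_\infty \le 2c$. This buys you the two-sided bound in one stroke, the sharper constant $2c$ in place of $2c+\epsilon$, and the correct observation that the hypothesis $\|f\|_{\mathrm{EE}}=\epsilon$ is not needed for the stated inequality; the paper's version is more elementary but asymmetric and slightly weaker. Your closing remark accurately identifies the paper's argument as the ``unpackaged'' form of your seminorm argument.
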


The proofs can be found in Appendix \ref{equ_error}.  

\section{Related Work}
\subsection{Equivariance and Invariance}
Symmetry  has long been implicitly used in DL to design networks with known invariances and equivariances. Convolutional neural networks enabled breakthroughs in computer vision by leveraging translational equivariance \cite{zhang1988shift, lecun1989backpropagation, zhang1990parallel}. Similarly, recurrent neural networks \cite{rumelhart1986learning, hochreiter1997long}, graph neural networks \cite{Maron2019InvariantAE, Garcia2021EN}, and capsule networks \cite{sabour2017dynamic, hinton2011transforming} all impose symmetries.
Equivariant DL models have achieved remarkable success in learning image data \citep{cohen2019gauge,  weiler2019e2cnn, cohen2016group, chidester2018rotation, Lenc2015understanding, kondor2018generalization, bao2019equivariant, worrall2017harmonic, cohen2016steerable, finzi2020generalizing, weiler2018learning, dieleman2016cyclic, Ghosh19Scale, Sosnovik2020Scale-Equivariant}. 

There is also a deep connection between symmetries and physics. Noether’s law gives a correspondence between conserved quantities and groups of symmetries. Thus, the study of equivariant nets in learning dynamical systems has gained popularity. 
\citet{Walters2021ECCO} proposed a rotationally-equivariant continuous convolution model for improved pedestrian and vehicle trajectory predictions. 
\citet{pmlr-v139-holderrieth21a} introduced Steerable Conditional Neural Processes for learning stochastic processes in physics that have invariances and equivariances.
\citet{wang2020incorporating} designed fully equivariant models with respect to symmetries of scaling, rotation, and uniform motion in physical dynamics. 

But most dynamics in real world do not have perfect symmetry and thus the proposed models might be overly-constrained. Recently, some work explored the idea of building approximately equivariant networks \cite{van2022relaxing, romero2021learning}.
\citet{Elsayed2020Revisiting} showed that spatial invariance may be overly restrictive and relaxing the spatial weight sharing could outperform both convolution and local connectivity. 
\citet{finzi2021residual} proposed a mechanism that sums equivariant and non-equivariant MLP layers for modeling soft equivariances, but it cannot handle large data like images or high-dimensional physical dynamics due to the number of weights in the fully connected layers.  Our method in contrast has more efficient convolutional layers and uses relaxed constraints to achieve approximate equivariance.

\subsection{Learning Dynamical Systems}
There is an increasing number of works in modeling dynamical systems with deep learning \citep{Shi2017DeepLF,chen2018neural,manek2020learning, Azencot2020ForecastingSD, xie2018tempogan, eulerian, pfaff2021learning}. An essential topic is physics-guided deep learning \citep{raissi2017physics, lutter2018deep, PDE-CDNN, li2020fourier, Wang2020TF} which integrates inductive biases from physical systems to improve learning. For example, 
\citet{Wang2020TF} proposed a hybrid model marrying the RANS-LES coupling method and the custom-designed U-net. \citet{Greydanus2019HamiltonianNN} and \citet{Cranmer2020LagrangianNN} build models on Hamiltonian and Lagrangian mechanics that respect conservation laws. \citet{Guen2021AugmentingPM} proposed a framework that augments physics-based models with deep data-driven models for forecasting dynamical systems.
In this work, we encode approximate symmetries as inductive biases into DL models to improve dynamics prediction without over-constraining the representation power.

\section{Experiments}

\newcolumntype{P}[1]{>{\centering\arraybackslash}p{#1}}
\begin{table*}[ht!]
\small
    \centering
    \caption{Prediction RMSE on three synthetic smoke plume datasets with approximate symmetries. Our proposed \texttt{RGroup} and \texttt{RSteer} methods demonstrate competitive performance. \textit{Future} means  testing data lies in the future time of the training data. \textit{Domain} means training and test data are from different spatial domain. }
    \label{tab:results_smoke}
    \begin{tabular}{P{1.3cm}|P{1cm}|P{1cm}P{1cm}P{1cm}P{1cm}P{1cm}P{1cm}P{1cm}P{1.1cm}P{1.4cm}}\toprule
       \multicolumn{2}{c}{Model}  &  $\texttt{MLP}$ &  $\texttt{Conv}$ &  $\texttt{Equiv}$ &  $\texttt{Rpp}$ &  $\texttt{Combo}$ &  $\texttt{CLCNN}$ &  $\texttt{Lift}$ &  $\texttt{RGroup}$ &  $\texttt{RSteer}$ \\
        \midrule
        \multirow{2}{*}{Translation} & \textrm{Future}   & 1.56\scriptsize{$\pm$0.08} & ----- &  0.94\scriptsize{$\pm$0.02}  & 0.92\scriptsize{$\pm$0.01} &  1.02\scriptsize{$\pm$0.02} & 0.92\scriptsize{$\pm$0.01} & 0.87\scriptsize{$\pm$0.03}  & \textbf{0.71\scriptsize{$\pm$0.01}}  & ----- \\
        & \textrm{Domain}   & 1.79\scriptsize{$\pm$0.13} & ----- &  0.68\scriptsize{$\pm$0.05}  & 0.93\scriptsize{$\pm$0.01} & 0.98\scriptsize{$\pm$0.01}  & 0.89\scriptsize{$\pm$0.01} & 0.70\scriptsize{$\pm$0.00}  & \textbf{0.62\scriptsize{$\pm$0.02}}  & ----- \\
        \midrule
        \multirow{2}{*}{Rotation} & \textrm{Future}   &   1.38\scriptsize{$\pm$0.06} &1.21\scriptsize{$\pm$0.01} &1.05\scriptsize{$\pm$0.06}  &0.96\scriptsize{$\pm$0.10} &1.07\scriptsize{$\pm$0.00} &0.96\scriptsize{$\pm$0.05} &0.82\scriptsize{$\pm$0.08} &0.82\scriptsize{$\pm$0.01} &\textbf{0.80\scriptsize{$\pm$0.00}}\\
        & \textrm{Domain}  &  1.34\scriptsize{$\pm$0.03}&1.10\scriptsize{$\pm$0.05}&0.76\scriptsize{$\pm$0.02}&0.83\scriptsize{$\pm$0.01}&0.82\scriptsize{$\pm$0.02}&0.84\scriptsize{$\pm$0.10}&0.68\scriptsize{$\pm$0.09}&0.73\scriptsize{$\pm$0.02}&\textbf{0.67\scriptsize{$\pm$0.01}}\\
        \midrule
        \multirow{2}{*}{Scaling} & \textrm{Future}   &  2.40\scriptsize{$\pm$0.02} & 0.83\scriptsize{$\pm$0.01} &  0.75\scriptsize{$\pm$0.03} & 0.81\scriptsize{$\pm$0.09}  &  0.78\scriptsize{$\pm$0.04} & 1.03\scriptsize{$\pm$0.01}  & 0.85\scriptsize{$\pm$0.01} & 0.76\scriptsize{$\pm$0.04} & \textbf{0.70\scriptsize{$\pm$0.01}}\\
        & \textrm{Domain}   & 1.81\scriptsize{$\pm$0.18} & 0.95\scriptsize{$\pm$0.02} & 0.87\scriptsize{$\pm$0.02}  & 0.86\scriptsize{$\pm$0.05}  & 0.85\scriptsize{$\pm$0.01} & 0.83\scriptsize{$\pm$0.05}  & 0.77\scriptsize{$\pm$0.02} & 0.86\scriptsize{$\pm$0.12}  & \textbf{0.73\scriptsize{$\pm$0.01}} \\
        \bottomrule
    \end{tabular}
    \vskip -0.1in
\end{table*}

\paragraph{Baselines}
We compare with several state-of-the-art (SoTA) methods from those without symmetry bias to perfect symmetry and SoTA approximately symmetric models. 
\begin{itemize}[noitemsep,topsep=2pt,parsep=2pt,partopsep=2pt,leftmargin=*]
    \item \texttt{MLP}: multi-layer perceptrons, an non-equivariant baseline with a weaker inductive bias than convolution neural nets. 
    \item \texttt{ConvNet}: standard convolutional neural nets that have full translation symmetry.
    \item \texttt{Equiv}: fully equivariant convolutional models. It is same as \texttt{ConvNet} for translation symmetry. We use \texttt{E2CNN} \citep{e2cnn} for rotation and \texttt{SESN} \citep{Sosnovik2020Scale} for scaling symmetry. 
    \item \texttt{Rpp} \citep{finzi2021residual}: Residual Pathway Priors, a SOTA approximate equivariance model that sums up the outputs from equivariant and non-equvariant layers while posing constraints on the non-equivariant layer in the loss function. We use the combination of \texttt{MLP} and \texttt{ConvNet} for translation, \texttt{ConvNet} and \texttt{E2CNN} for rotation, and \texttt{ConvNet} and \texttt{SESN} for scaling.
    \item \texttt{Combo}: models that start with non-equivariant layers followed by equivariant layers, discussed in Section \ref{other_app}.
    \item \texttt{CLCNN}: locally connected neural networks with equivariance constraints imposed in the loss function.
    \item \texttt{Lift} \citep{wang2021equivariant}: Lift expansion for modeling partial symmetry. Both the encoder and the main equivariant backbone are fed with the same input.
\end{itemize} 
EMLP \cite{finzi2021emlp} is also a SoTA equivariant model, but it cannot handle large data like images as stated in the paper, so we do not include it as a baseline. 

  
\paragraph{Experiments Setup}
All models are trained to perform forward prediction of raw velocity fields given historical data. For all datasets, we use a sliding window approach to generate sequence samples. We perform a grid hyperparameter search as shown in Table \ref{tab:hyperparameter}, including learning rate, batch size, hidden dimension, number of layers,  number of prediction errors steps for training. We also tune the number of filter banks for group convolution-based models and the coefficient of weight constraints for relaxed weight-sharing models.  The input length is fixed as 10. Meanwhile, we make sure that the total number of trainable parameters for every model is less than $10^7$ for a fair comparison.

We test all models under two scenarios. For \textit{test-future}, we train and test on the same tasks but in different time steps. For \textit{test-domain}, we train and test on different simulations/regions with an 80\%-20\% split. All models are trained to make the prediction of the next step given the previous steps as input. The first scenario evaluates
how well the models can extrapolate into the future for the same task. The second scenario estimates the capability of the models to generalize across different simulations/regions. We forecast in an autoregressive manner to generate multi-step  predictions during inference and evaluate them based on 20-step prediction RMSEs. All results are averaged over 3 runs with random initialization.

\begin{figure*}[h!]
	\centering
	\includegraphics[width=0.48\textwidth]{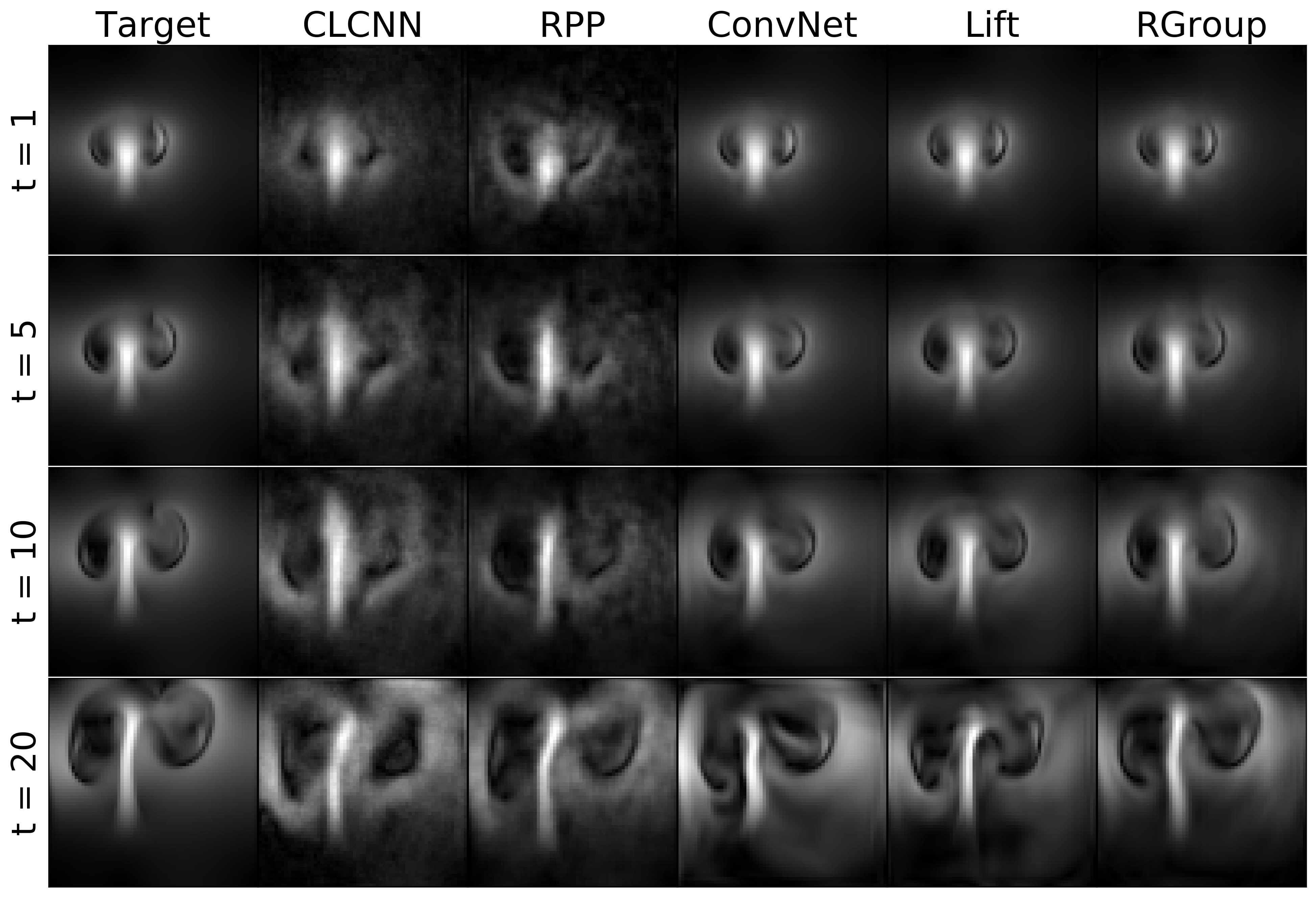} \hspace{1mm}
	\includegraphics[width=0.48\textwidth]{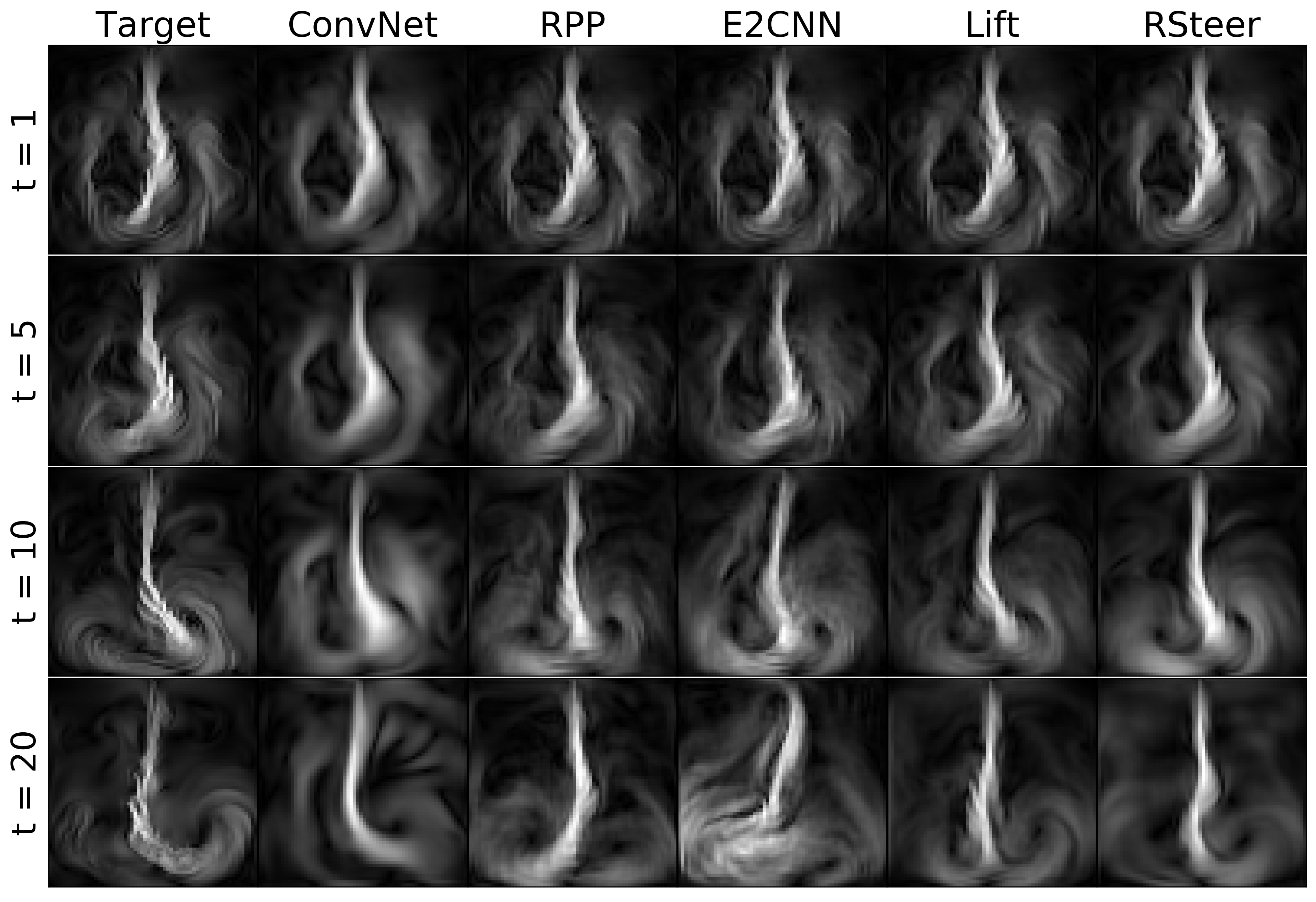}
	\caption{Target (ground truth) and model predictions comparison at time step 1, 5, 10, 20 for smoke simulation with approximate translation (left) and rotation (right) symmetries.}
	\label{vis:smoke}
\end{figure*}

\subsection{Experiments on Synthetic Smoke Plumes}
\paragraph{Data Description: }
The synthetic 64$\times$64 2-D smoke datasets are generated by PhiFlow \cite{phiflow} and contain smoke simulations with different initial conditions and external forces. We explore three symmetry groups: 1) \textit{Translation}: 35 smoke simulations with different inflow positions. We also horizontally split the entire domain into two separate sub-domains that have different buoyant forces. Although the inflow positions are translation equivariant, the closed boundary and the two different buoyant forces would break the equivariance. 2) \textit{Rotation}: 40 simulations with different inflow positions and buoyant forces. Both the inflow location and the direction of the buoyant forces have a perfect rotation symmetry with respect to $C_4$ group, but the buoyancy factor varies with the inflow positions to break the rotation symmetry. 3) \textit{Scaling}: It contains 40 simulations generated with different spatial steps $\Delta x$ and temporal steps $\Delta t$. And the buoyant force varies across the simulations to break the scaling symmetry. 

\paragraph{Prediction Performance: }
Table \ref{tab:results_smoke} shows the prediction RMSEs in three synthetic smoke plume datasets with different approximate symmetries by our proposed models and baselines.  CNNs are translation-equivariant because CNNs are inherently group convolution, where the group is the translation group, so we do not have a relaxed steerable model for translation. We can see, on the approximate translation dataset, our relaxed group convolution (\texttt{RGroup}) significantly outperforms baselines on both test sets. And for rotation and scaling,  the proposed relaxed steerable convolution (\texttt{RSteer}) always achieves the lowest RMSE and \texttt{RGroup} can outperform most baselines. 

Figure \ref{vis:smoke} shows the target and predictions of our proposed models and the best baselines at time step 1, 5, 10, 20 for smoke simulation with approximate translation (left) and rotation (right) symmetries. From the shape and frequency of the flows, predictions from our approximately equivariant models are much closer to the target than the baselines. Moreover, we can see that \texttt{E2CNN} predicts the smoke flowing to the wrong direction at time step 20, which could be a consequence of over-constraining from equivariance. 

Figure \ref{vis:alpha} in Appendix \ref{add_figs} shows the prediction performance of a scaling \texttt{RSteer} model trained with different regularization parameter $\alpha$ discussed in the section \ref{soft_equ_reg}. We see that the soft equivariance regularization can further improve its prediction performance on both test sets but large $\alpha$ may also hinder its learning.

\subsection{Learning Different Levels of Equivariance}
We use PhiFlow \cite{phiflow} to create 10 small smoke plume datasets with different levels of rotational equivariance. In each data set, both the inflow location and the direction of the buoyant forces have a perfect rotation symmetry with respect to the $C_4$ group. By varying the amount of difference in buoyant force between simulations with different inflow positions, we can control the amount of equivariance error in the data. The data equivariance error of each dataset is the mean absolute error between the simulations after they are all rotated back to the same inflow position. 

\begin{figure}[htb!]
	\centering
	\hspace*{-0.2cm} 
	\includegraphics[width=0.45\textwidth]{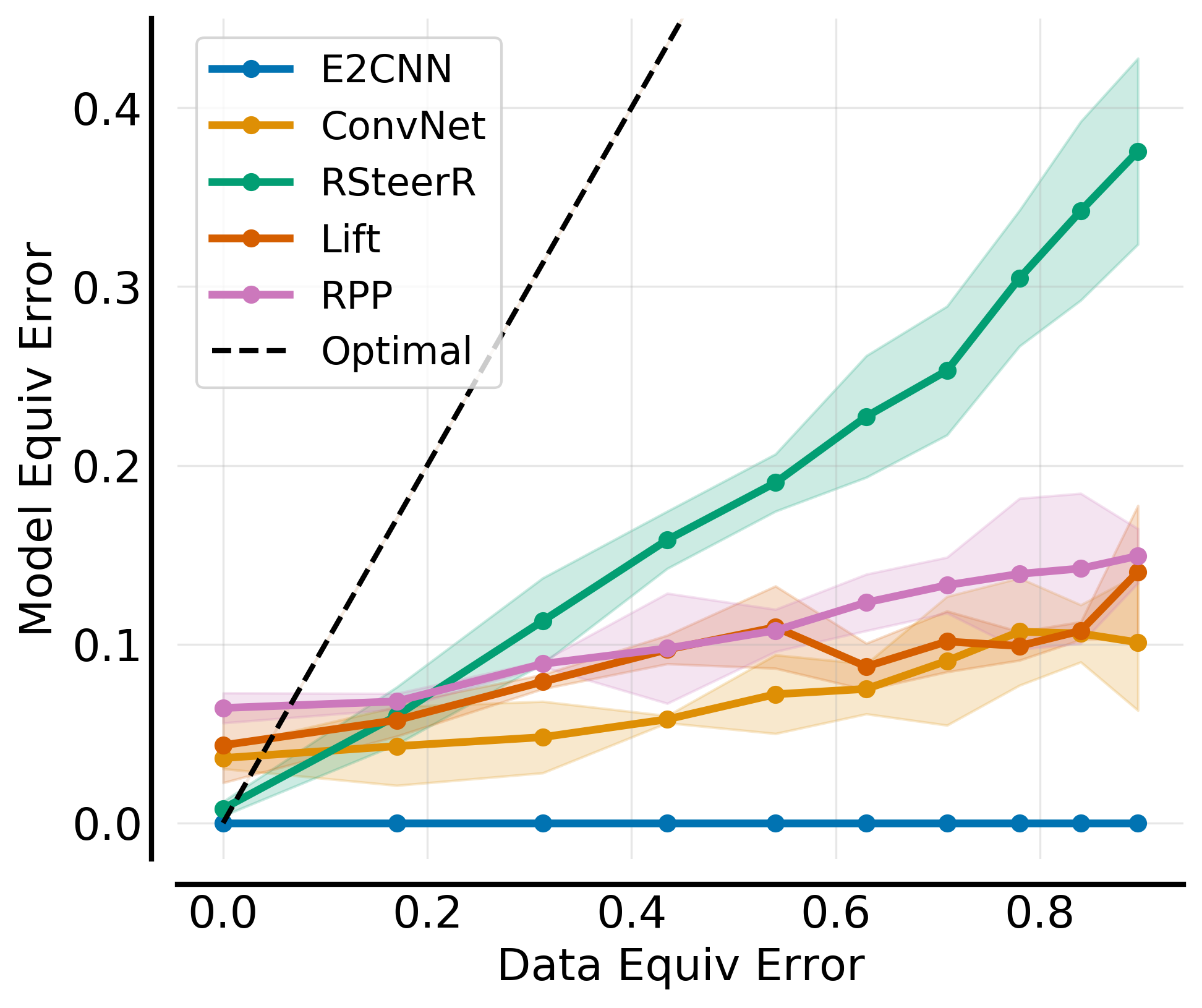}
	\caption{Model equivariance errors vs. data equivariance errors on synthetic smoke plume with different levels of rotational equivariance. We see that our \texttt{RSteer} can learn different levels of equivariance in the data much more accurately than other baselines.}
	\label{vis:equiv_error}
\end{figure}

We trained two-layer \texttt{ConvNet}, \texttt{E2CNN} and our relaxed rotation equivariant steerable convolution \texttt{RSteerR} on these 10 datasets. We calculate the equivariance error of each well-trained model based on Definition \ref{def_app}, where $G=C_4$ and the norm is L1 norm. From Figure \ref{vis:equiv_error}, we see that \texttt{E2CNN} always has zero equivariance error due to the overly restrictive symmetry constraint even if the data does not have perfect symmetry. And our \texttt{RSteerR} can learn different levels of equivariance in the data more accurately than other baselines. Since the prediction errors are not zeros, the equivariance errors in the model and data are not the same. This experiment demonstrates that our proposed methods based on relaxed weight sharing can learn the \textit{correct} amount of inductive biases from data while avoiding the stringent symmetry constraints.

\begin{table*}[t!]
\small
    \centering
    \caption{Prediction RMSEs  on  experimental jet flow data for different models. The proposed \texttt{RSteer} and \texttt{RSteer}  are designed for the corresponding assumed symmetry group. $\texttt{RSteerTR}$ and $\texttt{RSteerTS}$ combines relaxed translation group convolution with relaxed rotation and scale steerable convolution. }
    \label{tab:results_jet}
    \begin{tabular}{P{0.8cm}|P{0.9cm}P{0.9cm}P{1.15cm}|P{0.9cm}P{0.9cm}P{1.15cm}|P{0.9cm}P{0.9cm}P{1.15cm}|P{1.25cm}P{1.25cm}}\toprule
       Model  &  $\texttt{Conv}$ &  $\texttt{Lift}$ &  $\texttt{RGroup}$ & $\texttt{E2CNN}$ &  $\texttt{Lift}$ & $\texttt{RSteer}$ &  $\texttt{SESN}$  & $\texttt{Rpp}$   & $\texttt{RSteer}$ &  $\texttt{RSteerTR}$ & $\texttt{RSteerTS}$\\
        \midrule
   & \multicolumn{3}{c|}{Translation} & \multicolumn{3}{c|}{Rotation} &\multicolumn{3}{c|}{Scaling} & \multicolumn{2}{c}{Combination}   \\    
   \midrule
        \textrm{Future}  & 0.22\scriptsize{$\pm$0.06} & 0.17\scriptsize{$\pm$0.02} & 0.15\scriptsize{$\pm$0.00} &  0.21\scriptsize{$\pm$0.02}  & 0.18\scriptsize{$\pm$0.02}  & 0.17\scriptsize{$\pm$0.01}  &  0.15\scriptsize{$\pm$0.00}  &  0.16\scriptsize{$\pm$0.06}& 0.14\scriptsize{$\pm$0.01} & 0.14\scriptsize{$\pm$0.01} & 0.14\scriptsize{$\pm$0.02} \\
        \textrm{Domain}  & 0.23\scriptsize{$\pm$0.06} & 0.18\scriptsize{$\pm$0.02} & 0.16\scriptsize{$\pm$0.01} &  0.27\scriptsize{$\pm$0.03}  &  0.21\scriptsize{$\pm$0.04} & 0.16\scriptsize{$\pm$0.01} &  0.16\scriptsize{$\pm$0.01} & 0.16\scriptsize{$\pm$0.07} & 0.15\scriptsize{$\pm$0.00} & 0.15\scriptsize{$\pm$0.01} & 0.15\scriptsize{$\pm$0.00} \\
        \bottomrule
    \end{tabular}
\end{table*}

\begin{figure}[h!]
	\centering
	\hspace*{-0.1cm} 
	\includegraphics[width=0.49\textwidth]{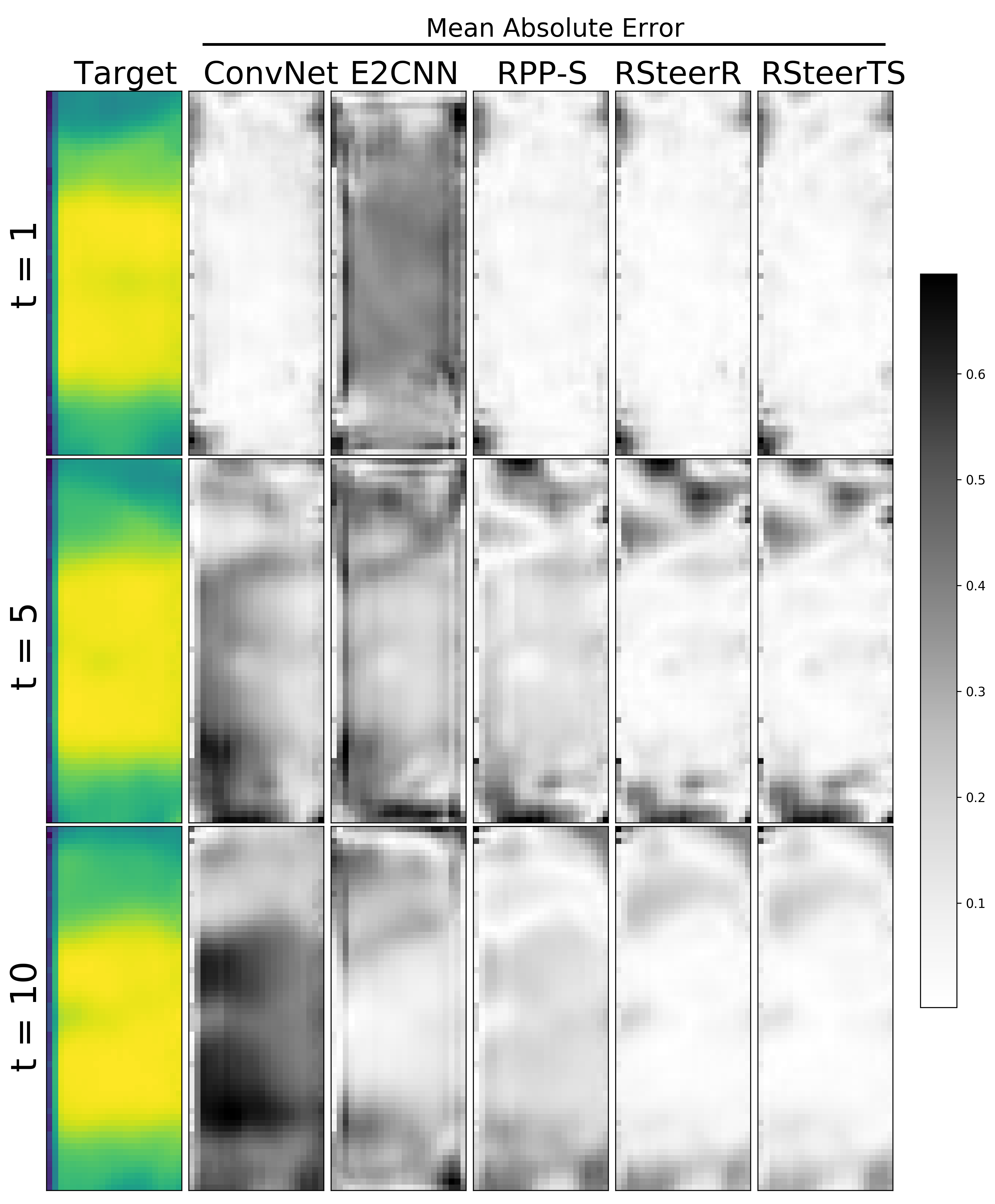}
	\caption{Target jet flow velocity norm fields and the prediction errors (MAE) of different models over $10$ time steps.}
	\label{vis:jet_flow_pred}
\end{figure}

\subsection{Experiments on Experimental Jet Flow Data}
\textbf{Data Description.} We use real experimental data on 2D turbulent velocity in NASA multi-stream jets that are measured using time-resolved particle image velocimetry \cite{Bridges2017MeasurementsOT}. Figure \ref{vis:jet_flow} in the Appendix \ref{add_figs} visualizes the measurement system of the jet flow. The white boxes show fields of view acquired on the streamwise plane at the jet centerline for multi-stream flows. There are three vertical stations at each axial location/white box, as illustrated by the pink lines. In other words, the dataset was acquired by 24 different stations at different locations. Since the data collected at the different locations are not acquired concurrently, we do not have the complete velocity fields of entire jet flows at each time step. Thus, we trained and test models on 24 62$\times$23 sub-regions of jet flows.

\textbf{Prediction Performance.}
We compare three equivariant models, three best-performing approximately equivariant baselines in the previous experiment as well as our proposed relaxed steerable convolution and relaxed group convolution. Table \ref{tab:results_jet} shows the prediction RMSEs on the jet flow dataset, and we group the results by each symmetry in the table. For each symmetry, our models based on relaxed weight sharing achieve lower errors than not only the fully equivariant model but also approximately equivariant baselines.  We also experimented with combining relaxed translation group convolution with relaxed rotation and scale steerable convolution, which correspond to $\texttt{RSteerTR}$ and $\texttt{RSteerTS}$ respectively in the table. We observe that $\texttt{RSteerTR}$ outperform  both $\texttt{RGroup}$ with relaxed  group convolution and $\texttt{RSteer}$ with relaxed steerable convolution. This implies relaxing more than one equivariance constraint can potentially lead to even better performance. Figure \ref{vis:jet_flow_pred} visualizes the target and mean absolute errors between model jet flow predictions and the ground truth (target), and we can see that our relaxed steerable CNNs achieve the lowest errors.

We also performed experiments on real-world ocean dynamics. We observe that all models have very close prediction performance after fine-tuning. Unlike the smoke plume or jet flow experiments in which our model's approximate equivariance bias better matched the ground truth than either the strictly equivariant model or the non-equivariant model, in this case all three levels of equivariance bias perform similarly.  We hypothesize that, while strict rotational symmetry is a feature of ocean currents, imposing it as a strict inductive bias does not provide a significant advantage over the baseline CNN.  Therefore, imposing a soft approximate equivariance bias also does not provide an advantage.   For additional results, see Appendix \ref{app:ocean}.

\section{Discussion}

We propose a new class of approximately equivariant networks that avoid stringent symmetry constraints to better fit real-world scenarios. Our methods  strike a good balance between inductive biases and model flexibility by relaxing the weight-sharing and weight-tying schemes in group convolution and steerable convolution. Based on the experiments on smoke plume simulations and real-world jet flow data, we observe that our proposed approximate equivariant networks  can outperform many state of the art baselines with no symmetry bias or with overly strict symmetry constraints.  Future work includes applying our relaxed weight sharing design to graph neural networks and theoretical analysis for approximately equivariant networks, including universal approximation and generalization.

\section*{Acknowledgements}
This work was supported in part by U.S. Department Of Energy, Office of Science, U. S. Army Research Office
under Grant W911NF-20-1-0334, Google Faculty Award, and NSF Grant \#2134274. R. Walters is supported by a
Postdoctoral Fellowship from the Roux Institute and NSF grants \#2107256 and \#2134178. We also thank Nicholas
J. Georgiadis and Mark P. Wernet and the NASA Glenn Research Center for providing the jet flow data and the instructions. This research used resources
of the National Energy Research Scientific Computing Center, a DOE Office of Science User Facility supported by
the Office of Science of the U.S. Department of Energy under Contract No. DE-AC02-05CH11231.



\bibliography{references}
\bibliographystyle{icml2022}

\newpage
\appendix
\onecolumn
\section{Experiments Details}
\subsection{Hyperparameter Tuning}
We perform grid hyperparameters search as shown in Table \ref{tab:hyperparameter}, including learning rate, batch size, hidden dimension, number of layers,  number of steps of prediction errors for training. We also tune the number of filter banks for group convolution models and the coefficient of weight constraints $\alpha$ for relaxed weight sharing models. The input length is fixed as 10.
In the meanwhile, we make sure the total number of trainable parameters for every model is fewer than $10^7$ in order to make fair comparison.
\begin{table*}[ht!]
\small
    \centering
    \caption{Hyperparameter Tuninig Range.}
    \label{tab:hyperparameter}
    \begin{tabular}{P{1.8cm}|P{1.8cm}|P{1.8cm}|P{1.8cm}|P{1.8cm}|P{2.7cm}|P{2.5cm}}\toprule
         LR  & Batch size & Hid-dim &  Num-layers &  Num-banks & \#Steps for Backprop & $\alpha$\\
        \midrule
        $10^{-2} \sim 10^{-5}$  & $8 \sim 64$ & $64 \sim 512$ &  $3 \sim 6$ &  $1 \sim 4$ & $3 \sim 6$ & $0, 10^{-2}, 10^{-4}, 10^{-6}$\\
        \bottomrule
    \end{tabular}
\end{table*}

\subsection{Additional Experiments on Real-world Ocean Dynamics}\label{app:ocean}

\begin{table*}[ht!]
\small
    \centering
    \caption{Prediction RMSE on ocean currents data.}
    \label{tab:results_ocean}
    \begin{tabular}{P{0.8cm}|P{0.9cm}P{0.9cm}P{1.15cm}|P{0.9cm}P{0.9cm}P{1.15cm}|P{0.9cm}P{0.9cm}P{1.15cm}|P{1.25cm}P{1.25cm}}\toprule
       Model  &  $\texttt{Conv}$ &  $\texttt{LiftT}$ &  $\texttt{RGroupT}$ & $\texttt{E2CNN}$ &  $\texttt{LiftR}$ & $\texttt{RSteerR}$ &  $\texttt{SESN}$  & $\texttt{RppS}$   & $\texttt{RSteerS}$ &  $\texttt{RSteerTR}$ & $\texttt{RSteerTS}$\\
        \midrule
        \textrm{Future}  & 0.52\scriptsize{$\pm$0.02}  & 0.52\scriptsize{$\pm$0.01} & 0.51\scriptsize{$\pm$0.00}  &  0.51\scriptsize{$\pm$0.03}  & 0.56\scriptsize{$\pm$0.06}   & 0.51\scriptsize{$\pm$0.01}  & 0.51\scriptsize{$\pm$0.01} &  0.50\scriptsize{$\pm$0.01}& 0.50\scriptsize{$\pm$0.01} &  0.50\scriptsize{$\pm$0.01}& 0.50\scriptsize{$\pm$0.02} \\
        \textrm{Domain}  & 0.46\scriptsize{$\pm$0.02}  & 0.46\scriptsize{$\pm$0.01} & 0.45\scriptsize{$\pm$0.01}  &  0.45\scriptsize{$\pm$0.01} &  0.53\scriptsize{$\pm$0.04} & 0.45\scriptsize{$\pm$0.02}  & 0.45\scriptsize{$\pm$0.02}  &  0.42\scriptsize{$\pm$0.03}& 0.42\scriptsize{$\pm$0.01} & 0.42\scriptsize{$\pm$0.01} & 0.42\scriptsize{$\pm$0.01} \\
        \bottomrule
    \end{tabular}
\end{table*}

\paragraph{Data Description: }
We use the reanalysis ocean current velocity data generated by the NEMO ocean engine \cite{Madec2008NEMOOE}. We selected an area (-180 $\sim$ - 150, -30 $\sim$ 0)from the Pacific Ocean from 01/01/2021 to 12/31/2021 and extracted 36 64×64 sub-regions for our experiments. We not only test all models on the test sets with different time range and spatial domain from the training set.                          

\paragraph{Prediction Performance: }
We compare three equivariant models, three best approximately equivariant baselines as well as our proposed relaxed steerable CNNs and relaxed group convolutions.

\subsection{Equivariance Error Analysis} \label{equ_error}

\begin{proposition}
Let $f \colon X \to Y$ where $G$ acts on $X$ and $Y$ by $\rho_X$ and $\rho_Y$ which are norm-preserving.   Assume that $f$ is approximately equivariant with $\|f\|_{\mathrm{EE}} \geq 0$. Assume $f_\theta$ is a $G$-equivariant approximator for $f$.  Then there exists $x_0 \in X$ such that
\[
\| f(x_0) - f_\theta(x_0) \| \geq \|f\|_{\mathrm{EE}} /2.
\]
\end{proposition}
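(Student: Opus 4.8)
The plan is to exploit the tension between the ground truth function $f$ having positive equivariance error while the approximator $f_\theta$ has zero equivariance error (being strictly $G$-equivariant). The idea is that if $f_\theta$ were too close to $f$ everywhere, we could use the triangle inequality together with the norm-preserving property of the representations to conclude that $f$ is itself nearly equivariant, contradicting $\|f\|_{\mathrm{EE}} \geq 0$ being large. So I would argue by a direct estimate rather than by contradiction.

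First I would unwind the definition of $\|f\|_{\mathrm{EE}}$: for any $\delta > 0$ there exist $x \in X$ and $g \in G$ with $\|f(\rho_X(g)(x)) - \rho_Y(g)f(x)\| > \|f\|_{\mathrm{EE}} - \delta$. Fix such an $x$ and $g$. Now I would insert $f_\theta$ into this expression in two places and use the triangle inequality:
\begin{align}
\|f(\rho_X(g)(x)) - \rho_Y(g)f(x)\| &\leq \|f(\rho_X(g)(x)) - f_\theta(\rho_X(g)(x))\| \nonumber \\
&\quad + \|f_\theta(\rho_X(g)(x)) - \rho_Y(g)f_\theta(x)\| \nonumber \\
&\quad + \|\rho_Y(g)f_\theta(x) - \rho_Y(g)f(x)\|. \nonumber
\end{align}
The middle term vanishes because $f_\theta$ is strictly $G$-equivariant. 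The third term equals $\|\rho_Y(g)(f_\theta(x) - f(x))\| = \|f_\theta(x) - f(x)\|$ since $\rho_Y(g)$ is norm-preserving. So the left side is bounded by $\|f(\rho_X(g)(x)) - f_\theta(\rho_X(g)(x))\| + \|f(x) - f_\theta(x)\|$.

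Therefore at least one of the two points $x_1 := \rho_X(g)(x)$ and $x_2 := x$ must satisfy $\|f(x_i) - f_\theta(x_i)\| \geq \tfrac12(\|f\|_{\mathrm{EE}} - \delta)$, since otherwise their sum would be strictly less than $\|f\|_{\mathrm{EE}} - \delta$. Letting $\delta \to 0$ (and, if $X$ is such that the supremum in $\|f\|_{\mathrm{EE}}$ is attained, one can take $\delta = 0$ directly; otherwise take a sequence $\delta_n \to 0$ and pass to a limit point, or simply note the inequality holds for every $\delta > 0$ hence the bound $\|f\|_{\mathrm{EE}}/2$ holds in the sup sense) yields a point $x_0$ with $\|f(x_0) - f_\theta(x_0)\| \geq \|f\|_{\mathrm{EE}}/2$, which is the claim.

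The only genuine subtlety — really the main obstacle, though a minor one — is the handling of the supremum in the definition of $\|f\|_{\mathrm{EE}}$: the proposition asserts an honest existence statement about a point $x_0$, whereas a priori the supremum might not be attained. I would address this either by assuming the relevant spaces are compact and $f, f_\theta$ continuous (so the supremum is attained and $\delta = 0$ works), or by stating the conclusion with the understanding that it holds arbitrarily closely, i.e. for every $\delta > 0$ there is $x_0$ with $\|f(x_0) - f_\theta(x_0)\| \geq \|f\|_{\mathrm{EE}}/2 - \delta$, which is the natural reading consistent with a sup-norm bound. Everything else is just the triangle inequality and the norm-preservation of $\rho_Y$.
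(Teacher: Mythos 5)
Your proof is correct and follows essentially the same route as the paper's: insert $f_\theta(\rho_X(g)x) - \rho_Y(g)f_\theta(x) = 0$, apply the triangle inequality, use norm-preservation of $\rho_Y(g)$, and conclude that one of the two resulting terms is at least half of $\|f\|_{\mathrm{EE}}$. Your $\delta$-argument for the case where the supremum is not attained is a minor refinement over the paper, which simply assumes the supremum is achieved at some $(x,g)$.
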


\begin{proof}
We leave implicit the action maps $\rho_X$ and $\rho_Y$.
By definition there exists $x \in X$ and $g \in G$ such that $\|f(g x) - g f(x)\| = \| f \|$, whereas $f_\theta(g x) -  g f_\theta(x) = 0.$  Thus by triangle inequality
\begin{align*}
&\| f \|_{\mathrm{EE}} = \| f(gx) - g f(x) \| \\
&= \| f(gx) - g f(x) - f_\theta(g x) +  g f_\theta(x) \| \\
&  \leq \| f(gx) - f_\theta(gx) \| + \| g f_\theta(x) - g f(x) \| 
\end{align*}
As the $G$-action is norm-preserving, 
\[
    \| f \| \leq \| f(gx) - f_\theta(gx) \| + \|  f_\theta(x) -  f(x) \|.
\]
Thus either $\| f(gx) - f_\theta(gx) \|$ or   $\|  f_\theta(x) -  f(x) \|$ is greater than $\|f\| /2$ in which case set $x_0$ to be $gx$ or $x$ respectively. 
\end{proof}

\begin{proposition}
Let $\lbrace f_\theta \rbrace$ be an approximately equivariant model class with varying $\| f_\theta \|_\mathrm{EE} \in \mathbb{R}_{\geq 0}$.  Assume a $G$-invariant norm.  Let $f \colon X \to Y$ be a function with $\| f \|_{\mathrm{EE}} = \epsilon$.  Assume $\| f - f_\theta \|_\infty \leq c$.  Then $\| \| f \|_\mathrm{EE} - \| f_\theta \|_\mathrm{EE} \| \leq 2c + \epsilon.$ 
\end{proposition}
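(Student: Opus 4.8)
The key observation is that the equivariance error $\|\cdot\|_{\mathrm{EE}}$ is a seminorm on the space of functions $X \to Y$, when $Y$ is equipped with the given $G$-invariant norm. Indeed, the map sending a function $h \colon X \to Y$ to the map $(x,g) \mapsto h(\rho_X(g)x) - \rho_Y(g) h(x)$ is linear in $h$, since each $\rho_Y(g)$ is a linear operator, and $\|h\|_{\mathrm{EE}}$ is the supremum of the norms of the values of this map; a supremum of norms precomposed with a linear map is a seminorm. Granting this, the plan is a two-step argument: first a reverse triangle inequality for this seminorm, then a crude pointwise bound of $\|f - f_\theta\|_{\mathrm{EE}}$ in terms of $\|f - f_\theta\|_\infty$.

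For the first step, subadditivity of $\|\cdot\|_{\mathrm{EE}}$ — which follows by pulling the norm inside the supremum and using the triangle inequality on $Y$ — gives, applied to $f = f_\theta + (f - f_\theta)$ and symmetrically, the reverse triangle inequality $\bigl|\, \|f\|_{\mathrm{EE}} - \|f_\theta\|_{\mathrm{EE}} \,\bigr| \le \|f - f_\theta\|_{\mathrm{EE}}$.

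For the second step, fix $x \in X$ and $g \in G$ and write $h = f - f_\theta$. Then
\[
\| h(\rho_X(g)x) - \rho_Y(g) h(x) \| \le \| h(\rho_X(g)x) \| + \| \rho_Y(g) h(x) \| = \| h(\rho_X(g)x) \| + \| h(x) \| \le 2 \|f - f_\theta\|_\infty \le 2c ,
\]
where the middle equality uses that the norm on $Y$ is $G$-invariant. Taking the supremum over $x$ and $g$ yields $\|f - f_\theta\|_{\mathrm{EE}} \le 2c$, and combined with the first step this gives $\bigl|\, \|f\|_{\mathrm{EE}} - \|f_\theta\|_{\mathrm{EE}} \,\bigr| \le 2c \le 2c + \epsilon$, which is the claim; note the argument in fact yields the sharper constant $2c$ and does not use the hypothesis $\|f\|_{\mathrm{EE}} = \epsilon$ at all. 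For readers who prefer to mirror the telescoping style of the previous proposition, the same estimate follows from $f_\theta(\rho_X(g)x) - \rho_Y(g) f_\theta(x) = [f_\theta(\rho_X(g)x) - f(\rho_X(g)x)] + [f(\rho_X(g)x) - \rho_Y(g) f(x)] + \rho_Y(g)[f(x) - f_\theta(x)]$ by bounding the three bracketed terms by $c$, $\epsilon$, and $c$ respectively and taking suprema.

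I do not expect a genuine obstacle; the argument is essentially a reverse triangle inequality. The points that require care are: the seminorm/linearity claim relies on $\rho_Y$ being a representation by linear maps rather than merely a group action; the $G$-invariance hypothesis on the norm is exactly what is consumed at the middle equality of the display; and the supremum defining $\|\cdot\|_{\mathrm{EE}}$ ranges over pairs $(x,g)$ while the one defining $\|\cdot\|_\infty$ ranges over $x$ only, so one must pass between them through the pointwise estimate rather than by formally manipulating suprema. If the $G$-invariance of the norm were weakened to $\|\rho_Y(g)\|_{\mathrm{op}} \le M$ for all $g$, the same steps would give the bound $(1+M)c$ in place of $2c$.
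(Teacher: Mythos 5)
Your proof is correct, and your primary route differs from the paper's. The paper proves this proposition by exactly the telescoping decomposition you mention only as an aside at the end: it writes $f_\theta(\rho_X(g)x) - \rho_Y(g)f_\theta(x)$ as a sum of three differences, bounds them by $c$, $\epsilon$, $c$ using the $G$-invariance of the norm, and concludes $\|f_\theta\|_{\mathrm{EE}} \leq 2c + \epsilon$; the passage from this one-sided bound to the stated two-sided inequality $\bigl|\,\|f\|_{\mathrm{EE}} - \|f_\theta\|_{\mathrm{EE}}\,\bigr| \leq 2c+\epsilon$ is left implicit (it needs the trivial observations $\|f\|_{\mathrm{EE}} = \epsilon \leq 2c+\epsilon$ and $\|f_\theta\|_{\mathrm{EE}} \geq 0$). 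Your main argument instead identifies $\|\cdot\|_{\mathrm{EE}}$ as a seminorm and applies the reverse triangle inequality together with the pointwise bound $\|f-f_\theta\|_{\mathrm{EE}} \leq 2c$; this is a cleaner structural explanation, it delivers the two-sided conclusion directly, and it yields the strictly sharper constant $2c$ without ever using the hypothesis $\|f\|_{\mathrm{EE}} = \epsilon$. The one loose phrase in your write-up is the claim that the telescoping version gives ``the same estimate'': it gives the weaker $2c+\epsilon$ bound (the one actually stated in the proposition), not your sharper $2c$. Both arguments consume the $G$-invariance of the norm at the same place, and your closing remark about replacing invariance with a uniform operator-norm bound is a correct and worthwhile generalization that the paper also gestures at (for the preceding proposition) without carrying out.
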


\begin{proof}
By triangle inequality and invariance of the norm, 
\begin{align*}
    &\| g f_\theta(x) - f_\theta(gx) \| \leq  \| g f_\theta(x) - g f(x) \| \\
    &\ \  + \| g f(x) - f(gx) \| + \| f(gx) - f_\theta(gx) \| \\
    &\leq  2c+ \epsilon.
\end{align*} 
\end{proof}

\subsection{Additional Figures} \label{add_figs}
\begin{figure}[htb!]
	\centering
	\includegraphics[width=0.6\textwidth]{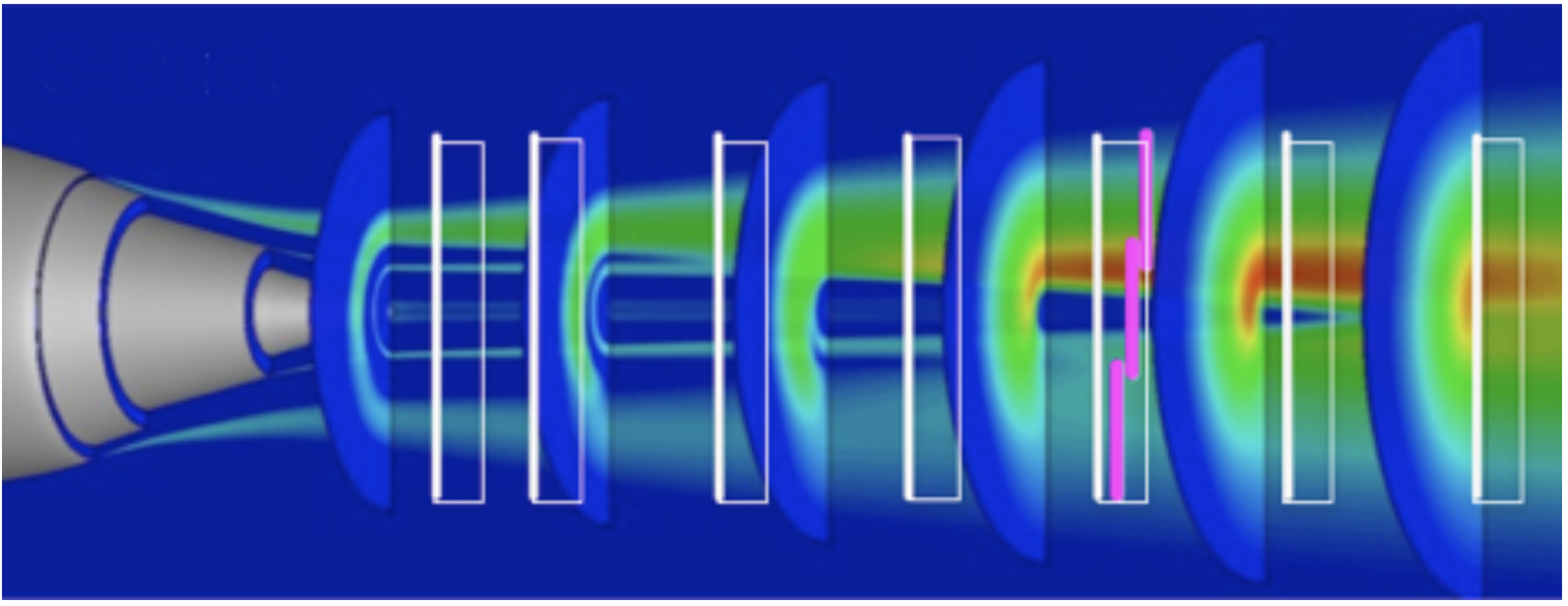}
	\caption{Visualization of axial measurement locations. White boxes show fields of view acquired on streamwise plane at jet centerline for multistream flows. There are three vertical stations at each axial locations/white box, as illustrated by the pink lines. Figure taken from \cite{Bridges2017MeasurementsOT}.}
	\label{vis:jet_flow}
\end{figure}

\begin{figure}[htb!]
	\centering
	\hspace*{-0.2cm} 
	\includegraphics[width=0.45\textwidth]{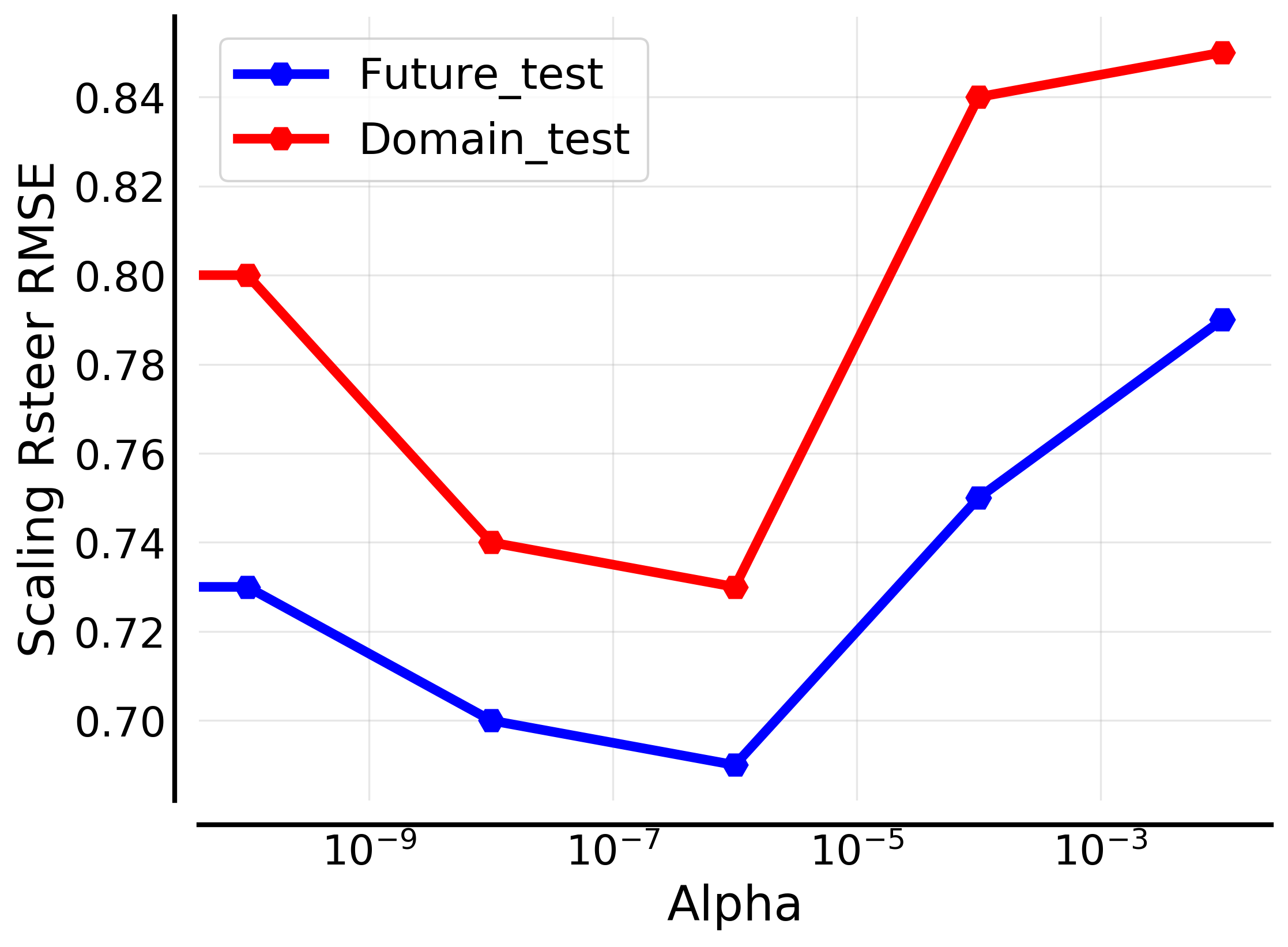}
	\caption{The prediction RMSEs on test sets of a scaling \texttt{RSteer} model trained with different regularization parameter $\alpha$}
	\label{vis:alpha}
\end{figure}


\end{document}